\documentclass[journal]{IEEEtran}

\usepackage{xurl}
\usepackage{cite}
\usepackage{amsmath,amssymb,amsfonts}
\usepackage{graphicx, float}
\usepackage{textcomp,enumitem}
\usepackage{xcolor}
\usepackage{multirow, booktabs, makecell}
\usepackage{algorithm2e}
\RestyleAlgo{ruled}
\usepackage[hidelinks]{hyperref}
\def\BibTeX{{\rm B\kern-.05em{\sc i\kern-.025em b}\kern-.08em
    T\kern-.1667em\lower.7ex\hbox{E}\kern-.125emX}}

\usepackage{amsthm}
\newtheorem{lemma}{\textbf{Lemma}}

\usepackage{soul}
\sethlcolor{yellow}
\soulregister\cite7
\soulregister\ref7
    
\begin{document}

\title{
Dynamic Load Model for Data Centers with Pattern-Consistent Calibration 
}

\author{
Siyu Lu,~\IEEEmembership{Student Member,~IEEE,}
Chenhan Xiao,~\IEEEmembership{Student Member,~IEEE,}
and~Yang~Weng,~\IEEEmembership{Senior Member,~IEEE}
\vspace{-2em}
\thanks{
S. Lu, C. Xiao, and Y. Weng are with the School of Electrical, Computer and Energy Engineering at Arizona State University, Tempe, AZ 85281 USA, e-mail: \{siyulu12, cxiao20, yang.weng\}@asu.edu.}
}

\maketitle

\begin{abstract}
The rapid growth of data centers has made large electronic load (LEL) modeling increasingly important for power system analysis. 
Such loads are characterized by fast workload-driven variability and protection-driven disconnection and reconnection behavior that are not captured by conventional load models.
Existing data center load modeling includes physics-based approaches, which provide interpretable structure for grid simulation, and data-driven approaches, which capture empirical workload variability from data. 
However, physics-based models are typically uncalibrated to facility-level operation, while trajectory alignment in data-driven methods often leads to overfitting and unrealistic dynamic behavior.
To resolve these limitations, we design the framework to leverage both physics-based structure and data-driven adaptability.
The physics-based structure is parameterized to enable data-driven pattern-consistent calibration from real operational data, supporting facility-level grid planning. 
We further show that trajectory-level alignment is limited for inherently stochastic data center loads. Therefore, we design the calibration to align temporal and statistical patterns using temporal contrastive learning (TCL).
This calibration is performed locally at the facility, and only calibrated parameters are shared with utilities, preserving data privacy.
The proposed load model is calibrated by real-world operational load data from the MIT Supercloud, ASU Sol, Blue Waters, and ASHRAE datasets. Then it is integrated into the ANDES platform and evaluated on the IEEE 39-bus, NPCC 140-bus, and WECC 179-bus systems.
We simulate the dynamic responses of heterogeneous LELs, including data centers, cryptocurrency mining facilities, and hydrogen electrolyzers, under system-level disturbances. 
We find that interactions among LELs can fundamentally alter post-disturbance recovery behavior, producing compound disconnection-reconnection dynamics and delayed stabilization that are not captured by uncalibrated load models.
\end{abstract}

\begin{IEEEkeywords}
Data centers, Dynamic Load Modeling, Pattern-Consistent Calibration, Transient Stability Analysis
\end{IEEEkeywords}

\vspace{-0.5em}
\section{Introduction}

The rapid growth of large-scale data centers has introduced a new class of grid-side demand, operating at tens to hundreds of megawatts and exhibiting fast variability and protection-driven disconnection behavior~\cite{0929chen2025electricity}. 
Data centers may abruptly disconnect from the grid to safeguard critical server equipment during voltage or frequency disturbances~\cite{NERC_LLTF_2024}.
Such disconnections can shed large volumes of load, creating power imbalances that lead to further voltage and frequency deviations in the grid. 
These deviations will compromise generator synchronism and degrade power quality, increasing the risk of regional stability events, including cascading outages \cite{ERCOT_LEL_RideThrough_2025,shamseldein2025liability}.
Recent events include a 1{,}500~MW simultaneous data center load loss in 2024~\cite{ghosh2025enhancing}, which caused a frequency and voltage rise to 60.047~Hz and 1.07~p.u. 
These events demonstrate that such system-level risks are both practical and urgent.
In response to these emerging risks, such loads have been formally defined as large electronic loads (LELs), constituting a distinct class of load demand for focused study~\cite{ERCOT_LEL_RideThrough_2025}.
As data centers continue to scale and cluster geographically~\cite{wan2025grid}, dynamic load modeling of data centers and their grid-side behavior has become essential for reliable power system planning and operation~\cite{NERC2025SOROverview,ERCOT2025LargeLoadStability}.


Prior to the expansion of data centers, conventional large-load modeling primarily focused on industrial facilities such as factories and steel mills \cite{Jang1999EAFLoadModel}. 
These loads were typically represented using continuous electromechanical dynamics formulated as ordinary differential 
equations~\cite{7501497}, often combined with static ZIP components to capture steady-state voltage-power characteristics \cite{huang2019generic,Arif2018LoadModeling,logar2012modeling}. 
Despite their effectiveness in modeling slowly varying electromechanical loads, these models are limited for modern LELs, such as data centers. 
Unlike traditional industrial facilities, data centers interface with the grid primarily through power-electronic conversion stages, and are driven by rapidly-changing and often stochastic computational loads~\cite{0929chen2025electricity}.

To address this issue, recent load modeling approaches developed for data centers can be broadly classified into data-driven and physics-based methods \cite{1021parikh2025strain}.
Data-driven methods learn load behavior directly from historical measurements, using statistical models such as Hidden Markov Models~\cite{HMM}, and learning-based models \cite{Loadprofile} such as recurrent neural network~\cite{kumar2018long}.
While these methods can closely fit training data, their lack of explicit physical structure prevents generalization at a structural level, which makes them unreliable under unseen operating conditions for simulation-based load modeling and grid-side behavior analysis.
To resolve this issue, physics-based load modeling approaches explicitly represent the major subsystems of data centers, including IT workloads, cooling systems, and protection mechanisms, using physically motivated formulations~\cite{0522jimenez2025data,0614peivandizadeh2025theoretical,0822ko2025wide,1006kwon2025operational,1004xie2025enhancing}. For example, IT workload dynamics have been modeled using Poisson processes to capture stochastic computational job arrivals~\cite{0522jimenez2025data}.
These physically interpretable models capture data center load behavior at a structural level, reflecting dominant operating regimes that shape characteristic patterns of load behavior.
However, compared to data-driven approaches, they generally lack the flexibility to adapt load model parameters to the facility-specific operating patterns.
This limits their effectiveness in realistic grid simulations, where heterogeneous data center facilities exhibit distinct load patterns that require facility-specific parameter calibration.

Given these limitations, a natural direction is to develop a data center load modeling framework that not only retains a physics-based backbone but also incorporates a calibrated, data-driven capability to align temporal and statistical load patterns when adapting model parameters to individual facilities.
To this end, this paper formulates data center load modeling as a parameterized protocol, explicitly designed to support subsequent calibration using operational data. 
Building on recent advances that decompose data center load into IT workload, cooling load, auxiliary load, and protection mechanism \cite{0522jimenez2025data, 1004xie2025enhancing}, each subsystem is parameterized according to its underlying physical structure and the characteristic load patterns it induces.
Specifically, IT workload is represented using a duty-idle operating pattern, with server utilization modeled as an Ornstein-Uhlenbeck process~\cite{roberts2016validation}. 
Cooling and auxiliary subsystems are modeled using established motor-based dynamic equations and ZIP representations~\cite{milano2010power}. Disconnection and reconnection behaviors are captured through a rule-based parameterization that reflects protection logic and operational constraints~\cite{0820conto2025texas}.

Building on this physics-based backbone, we propose a data-driven calibration approach tailored to data center load characteristics for robust parameter estimation with alignment of temporal and statistical load patterns.
Since the data center load is inherently stochastic, directly aligning with instantaneous load can easily lead to overfitting. A more practical strategy is to align the temporal and statistical features of the load, which is sufficient for system operators to reproduce realistic load behavior \cite{0929chen2025electricity}. 
To extract such features, we adopt temporal contrastive learning (TCL)~\cite{liu2024timesurl}, as it provides an unsupervised, window-based representation that discriminates between distinct operating regimes while capturing invariant temporal patterns.
This approach follows directly from the characteristics of data center loads. It exhibits recurring burst-idle windows driven by job scheduling~\cite{0929chen2025electricity}, requiring discrimination among operating regimes and invariance to nuisance variations including small timing shifts and amplitude scaling.
Calibration is then performed by aligning TCL-based feature representations extracted from real facility-level load and simulated load trajectories.

Additional benefits of our design include generalization from data centers to other LELs, as well as privacy-preserving deployment for data center operators. 
These properties arise directly from the parameterized modeling and decentralized calibration architecture, which are designed to abstract shared structural load behaviors while avoiding the exchange of raw operational data.
Beyond data centers, the parameterized modeling extends to other emerging LEL types, like cryptocurrency mining facility \cite{menati2023modeling,10989872} and hydrogen electrolyzers \cite{tavanaee2025fault,10252006}. They share common structures, including duty-idle workload dynamics, cooling and auxiliary load patterns, and protection-driven disconnection behavior. By abstracting these shared structures, the unified parameterization improves the efficiency of large-scale grid simulations involving heterogeneous LELs.
The calibration pipeline is shown in Fig.~\ref{fig:federated}. The utility first distributes an uncalibrated load model to LEL facilities. Each facility then performs pattern-consistent calibration locally and returns only calibrated model parameters.
Throughout this process, raw operational load, particularly workload traces, are never shared. 
It preserves data privacy and prevents disclosure of proprietary operational information related to computing utilization and business strategies~\cite{WRI2025DataCenters, DCD2024DOEcrypto}.

\begin{figure}[h]
    \centering
     \vskip -0.15in
    \includegraphics[width=0.9\linewidth]{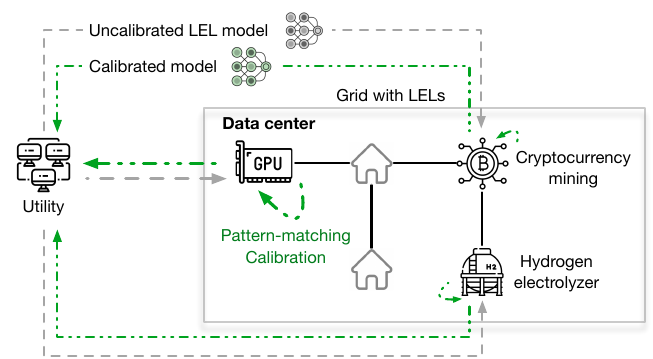}  
    \vskip -0.1in
    \caption{Pattern-consistent calibration framework for realistic LEL dynamic behavior in grid analysis with data privacy preserved.}
    \label{fig:federated}
     \vskip -0.1in
\end{figure}

To evaluate dynamic responses under system-level disturbances, we design validation studies to capture variations in network scale, geographical grid characteristics, and heterogeneous LEL deployment. Accordingly, simulations are conducted on IEEE 39-bus, NPCC 140-bus, and WECC 179-bus systems using \texttt{ANDES}~\cite{ANDES}. 
To enhance realism, model parameters are calibrated using operational data from both academic and industrial computing facilities, including MIT Supercloud~\cite{MIT-data}, ASU Sol, Blue Waters~\cite{BlueWatersData}, and ASHRAE~\cite{ASHRAE_kaggle}, with protection and reconnection parameters derived from utility reports~\cite{ERCOT2025LargeLoadStability}. 
The results demonstrate that parameter calibration is critical for accurately capturing the system-level impacts of LELs in dynamic grid simulations.

The remainder of the paper is organized as follows. Section~\ref{sec:lel-modeling} presents the physics-based, parameterized load modeling backbone. Section~\ref{sec:pattern-match} introduces the pattern-consistent calibration framework. Section~\ref{sec:numerical} evaluates mixed-LEL simulations using \texttt{ANDES}. Section~\ref{sec:conclusion} concludes the paper.

\vspace{-0.5em}
\section{Parameterized Physics-Based Load Modeling of Data Centers}
\label{sec:lel-modeling}

To enable efficient calibration under heterogeneous and facility-specific operational data, we propose a physics-based load modeling backbone with explicit parameterization. 
It enables calibration by tuning a small set of physically meaningful parameters, rather than learning unconstrained input-output mappings as in pure data-driven load modeling \cite{HMM,Loadprofile,kumar2018long}.
Following established data center modeling practices \cite{0522jimenez2025data,0614peivandizadeh2025theoretical,0822ko2025wide,1006kwon2025operational,1004xie2025enhancing}, the aggregate load is decomposed into IT workload (Section~\ref{sec:duty-idle}), cooling load and auxiliary load (Section~\ref{sec:cooling}). A parameterized protection mechanism is further introduced to describe disconnection and reconnection behavior (Section~\ref{sec:UPS}).
Together, these components capture the essential dynamics of data center power consumption.

\vspace{-1em}
\subsection{Duty-Idle Workload Modeling}
\label{sec:duty-idle}

In data centers, IT workload power consumption dominates the total electrical demand, typically accounting for more than half of total demand~\cite{OffuttZhu2025_R48646}.
Accordingly, IT workload is treated as the core subsystem in the proposed load model.
Within IT workloads, CPU and GPU utilization can rapidly transition between idle and peak duty in response to computational job arrivals, as observed in MIT Supercloud GPU utilization traces~\cite{MIT-data}. 
Accordingly, a duty-idle formulation~\cite{0522jimenez2025data} is adopted for workload modeling following:
\begin{equation}\label{eq:pwork}
p_{\text{work}}(t) \;=\; p_{\text{base}} \;+\; \eta(t)\,\big(p_{\text{full}}-p_{\text{base}}\big), 
\quad \eta(t)\in[0,1],
\end{equation}
where $p_{\mathrm{base}}$ denotes the standby or idle power draw, $p_{\mathrm{full}}$ is the rated full-duty power, and $\eta(t)$ denotes the instantaneous utilization level. In Eq. \eqref{eq:pwork}, accurately modeling $\eta(t)$ is central to capturing workload bursts, sustained high-utilization periods, and return-to-idle behavior in data centers. 



In prior work, $\eta(t)$ has often been modeled using Poisson-based processes to capture stochastic computational job arrivals~\cite{0522jimenez2025data}.
However, Poisson processes are memoryless and therefore produce unrealistically abrupt fluctuations (see bottom panel of Fig. \ref{fig:OU}). 
When used in isolation, they cannot capture the temporal correlation observed in real CPU/GPU workloads.
Moreover, data center loads typically exhibit mean reversion around nominal utilization levels.
To capture these characteristics while preserving stochasticity, we introduce temporal coupling and mean-reverting dynamics into the Poisson-based workload model. This formulation naturally leads to an Ornstein–Uhlenbeck (OU) process~\cite{roberts2016validation}:
\begin{equation}\label{eq:eta}
\tau_\eta \dot{\eta} = -(\eta - \mu_\eta) + \xi(t) + \sum_{k} A_k\,\delta(t-t_k),
\vspace{-0.3em}
\end{equation}
where $\tau_\eta>0$ is the time constant governing mean reversion, $\mu_\eta\in[0,1]$ is the nominal utilization level, and $\xi(t)$ is a zero-mean stochastic perturbation with variance $\sigma_\xi^2$ capturing continuous workload fluctuations. The impulse term represents burst events, where event times ${t_k}$ follow a Poisson process with rate $\lambda$ and amplitudes $A_k$ are drawn from a log-normal distribution with parameters $\theta_A$. The resulting workload dynamics combine mean reversion, continuous stochastic variation, and burst events, producing trajectories that are both smooth and bursty (see the top panel of Fig.~\ref{fig:OU}), directly reflecting the dominant temporal characteristics observed in real CPU/GPU utilization data (see Section~\ref{sec:validation-calibration}).


\begin{figure}[H]
    \centering
    \vskip -0.15in
    \includegraphics[width=.9\linewidth]{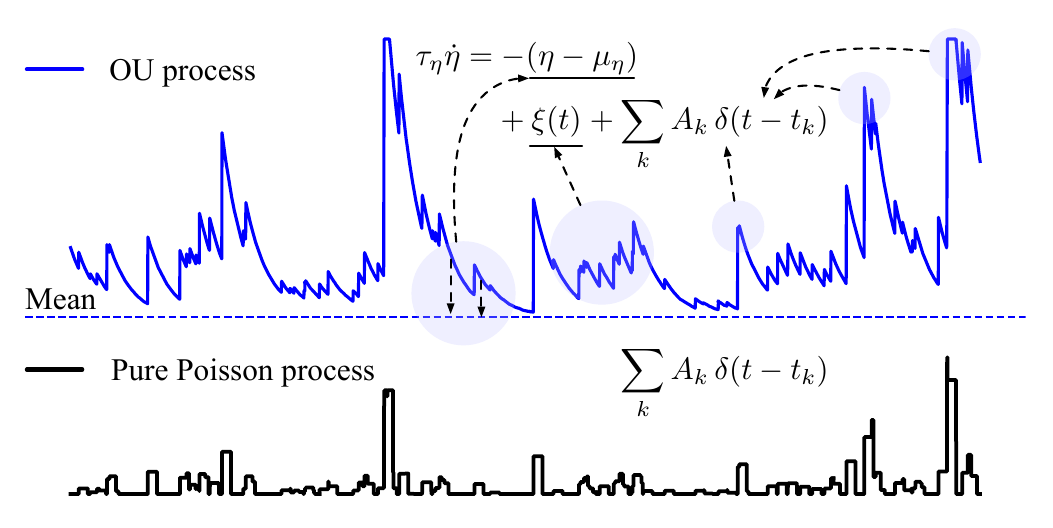}
    \vskip -0.1in
    \caption{Illustration of an OU process in Eq. (\ref{eq:eta}) and a Poisson process. The pure Poisson process is memoryless and too stochastic.}
    \label{fig:OU}
    \vskip -0.1in
\end{figure}

Although this subsection focuses on data center workloads, similar duty-idle patterns arise in other LELs. For example, in cryptocurrency mining facilities, the duty state corresponds to active hashing operation, while the idle state reflects curtailed or suspended mining~\cite{menati2023modeling,10989872}. In hydrogen electrolyzers, the duty state corresponds to active electrolysis~\cite{10252006}, while reduced or zero production represents an idle state.
These similarities allow the OU-based workload module to be reused across LEL types to enable unified parameterization.

To support the pattern-consistent calibration framework introduced in Section~\ref{sec:pattern-match}, the key workload parameters are
\begin{equation}\label{eq:workload-parameter}
    \Theta_{\mathrm{work}} = \{
    p_{\mathrm{base}}, p_{\mathrm{full}}, \tau_\eta, \mu_\eta, \sigma_\xi, \lambda, \theta_A
    \}.
\end{equation}

\subsection{Cooling and Auxiliary Load Modeling}
\label{sec:cooling}

Beyond IT workload-driven consumption, cooling systems account for the majority of the remaining power demand~\cite{OffuttZhu2025_R48646} for LELs.
Since cooling units (e.g., chillers, compressors, fans, and pumps) are predominantly motor-driven, the aggregated cooling load can be represented using the Western Electricity Coordinating Council (WECC) Composite Load Model for Dynamic Simulations (CMPLDW) framework~\cite{kosterev2008load}. 
Within this framework, the dynamic behavior of the cooling motors is modeled in the synchronous rotating direct-quadrature reference frame ($dq$-frame), and the instantaneous active and reactive power consumption is computed as
\begin{equation}
\begin{aligned}
p_{\mathrm{cool}}(t) &= v_{ds}(t)\,i_{ds}(t) + v_{qs}(t)\,i_{qs}(t),\\
q_{\mathrm{cool}}(t) &= v_{qs}(t)\,i_{ds}(t) - v_{ds}(t)\,i_{qs}(t),
\end{aligned}
\end{equation}
where the stator voltages $(v_{ds}, v_{qs})$ are obtained from the terminal bus voltage, and the stator currents $(i_{ds}, i_{qs})$ arise from solving the coupled differential-algebraic equations governing stator and rotor fluxes, slip, and electromagnetic torque. These equations follow the standard WECC composite load motor formulation, parameterized by the electrical constants $(R_s,X_s,X_m,R_r,X_r)$ and the mechanical inertia $H_m$~\cite{kosterev2008load}.

Furthermore, cooling motors exhibit voltage-sensitive behavior during disturbances. Under deep voltage sags, reduced electromagnetic torque and increased reactive current demand accelerate rotor slip. Stalling occurs if $|V| < V_{\mathrm{stall}}$ persists longer than $\tau_{\mathrm{stall}}$. 
When stalling is triggered, the cooling block remains disconnected for a recovery interval $T_{\mathrm{cool}}$. 
To support the pattern-consistent calibration framework, we parameterize the cooling module by
\begin{equation}\label{cooling-parameter}
    \Theta_{\mathrm{cool}} = \{R_s,X_s,X_m,R_r,X_r,H_m,V_{\mathrm{stall}},\tau_{\mathrm{stall}},T_{\mathrm{cool}}
    \}.
\end{equation}

The remaining data center power consumption is attributed to non-IT auxiliary loads, including lighting, control units, security electronics, networking equipment, and other miscellaneous components \cite{0522jimenez2025data}.
Due to their fast electrical response and negligible internal dynamics compared to cooling motors, these loads are modeled using a standard ZIP formulation~\cite{milano2010power}:
\begin{equation}
\begin{aligned}
p_{\mathrm{aux}}(V)
&= p_{\mathrm{aux},0}\cdot (\alpha_Z (\tfrac{V}{V_0})^2
+\alpha_I (\tfrac{V}{V_0})
+\alpha_P),\\
q_{\mathrm{aux}}(V)
&= \beta_{\mathrm{aux}}\cdot p_{\mathrm{aux}}(V),
\end{aligned}
\end{equation}
where $p_{\mathrm{aux},0}$ denotes the nominal auxiliary active power at the reference voltage $V_0$, $\beta_{\mathrm{aux}}$ is a constant power factor, and the coefficients satisfy $\alpha_Z + \alpha_I + \alpha_P = 1$. To support facility-specific calibration, the auxiliary module is parameterized by
\begin{equation}\label{aux-parameter}
    \Theta_{\mathrm{aux}} = \{
    p_{\mathrm{aux},0}, \alpha_Z,\alpha_I, \alpha_P, \beta_{\mathrm{aux}}
    \}.
\end{equation}


\subsection{Grid-Interfacing Protection and Recovery Modeling}
\label{sec:UPS}

To accurately reflect the data center load response to grid-level disturbances, we incorporate a protection and recovery module. 
In practice, data centers deploy multi-stage uninterruptible power supply systems (UPS)~\cite{NERC_LLTF_2024,0929chen2025electricity} to isolate IT loads from the grid and provide local supply.
Such protection is activated when voltage or frequency deviations exceed prescribed tolerance thresholds:
\begin{equation}\label{eq:beyond-normal}
|V - V_{\mathrm{ref}}| > \Delta V 
    \text{ or } |\omega - \omega_{\mathrm{ref}}| > \Delta \omega,
\end{equation}
where $\Delta V$ and $\Delta \omega$ define the allowable deviations from the nominal voltage $V_{\mathrm{ref}}$ and frequency $\omega_{\mathrm{ref}}$. 
When protection is activated, partial or complete load shedding occurs and is modeled as
\begin{equation}\label{eq:protect}
\tilde{p}_{\mathrm{load}}(t) = \kappa(t)  p_{\mathrm{load}}(t),
\quad
\tilde{q}_{\mathrm{load}}(t) = \kappa(t)  q_{\mathrm{load}}(t),
\end{equation}
where 
\(p_{\mathrm{load}} = p_{\mathrm{work}}+p_{\mathrm{cool}}+p_{\mathrm{aux}}\)
denotes the aggregate pre-protection power demand, and
\(q_{\mathrm{load}} = q_{\mathrm{cool}}+q_{\mathrm{aux}}\)
denotes the corresponding reactive demand since the IT workload is assumed to operate at near-unity power factor.
The term $\kappa(t)\in[0,1]$ represents a retained-load fraction that captures the combined impact of UPS isolation for IT loads and protection-induced disconnection of cooling and auxiliary components.

To prevent nuisance responses to shallow or short-lived disturbances, protection actions are initiated only if the violation in Eq. (\ref{eq:beyond-normal}) persists for longer than a minimum duration $t_{\mathrm{delay}}^{\mathrm{trip}}$, after which load shedding is triggered according to the retained-load formulation in Eq. (\ref{eq:protect}).

After load shedding is triggered, recovery-driven reconnection is governed by both electrical and operational constraints. Under electrical constraints, the bus voltage and system frequency must return to acceptable bounds, $|V - V_{\mathrm{ref}}| \le \Delta V $ and $ |\omega - \omega_{\mathrm{ref}}| \le \Delta \omega$ and remain stable for a waiting interval $t_{\mathrm{wait}}^{\mathrm{recon}}$.
Under operational constraints, a minimum reconnection delay $t_{\mathrm{delay}}^{\mathrm{recon}}$ must elapse following disconnection, representing requirements such as UPS recharge or controller reset time.
Once these conditions are met, the retained-load fraction $\kappa(t)$, constrained by $\kappa_{\min} \le \kappa(t) \le \kappa_{\max}$, is gradually restored toward unity under a ramp-rate constraint $\dot{\kappa}(t) \le r_{\kappa}$. The parameter $r_{\kappa}$ denotes a site-dependent reconnection rate that prevents unrealistic step changes in demand and captures staged restoration behavior.


Other LELs also have grid-interfacing protection. For example, cryptocurrency mining facilities typically rely on power supply unit (PSU)-level protection and feeder relays~\cite{NERC_LLTF_2024,menati2023modeling,10989872}, while hydrogen electrolyzers incorporate programmable logic controller (PLC)-supervised stack protection~\cite{tavanaee2025fault,10252006}. Therefore, this module can also extend across varied LELs. 
To support the pattern-consistent calibration framework, we parameterize the protection and recovery module by
\begin{equation}\label{eq:prot-parameter}
\begin{aligned}
\Theta_{\mathrm{prot}} = \{
V_{\mathrm{ref}},
\omega_{\mathrm{ref}},
\Delta V,
\Delta \omega,
t_{\mathrm{delay}}^{\mathrm{trip}},
t_{\mathrm{wait}}^{\mathrm{recon}},
t_{\mathrm{delay}}^{\mathrm{recon}},\\
\kappa_{\min},
\kappa_{\max}, 
r_{\kappa} 
\}.
\end{aligned}
\end{equation}



\section{Pattern-Consistent Calibration Framework}
\label{sec:pattern-match}

With the physics-based load model established and key parameters identified, we develop a pattern-consistent calibration framework to adapt the model to facility-specific data.
A key insight is that grid planning and dynamic studies require reproducing the statistical and temporal patterns of LEL behavior rather than exact power trajectories, which are unnecessary and often restricted by privacy constraints.
Each LEL site in the grid receives an uncalibrated load model from the utility and performs local calibration of the parameter sets $(\Theta_{\mathrm{work}}, \Theta_{\mathrm{cool}}, \Theta_{\mathrm{aux}}, \Theta_{\mathrm{prot}})$ using on-site measurements. Only the calibrated parameters are transmitted back to the utility, as illustrated in Fig.~\ref{fig:federated}, avoiding the exchange of raw load data. 

\subsection{Statistical Pattern-Consistent Calibration}
\label{sec:pattern-workload}

For the workload, cooling, and auxiliary subsystems, we calibrate the parameter sets
\((\Theta_{\mathrm{work}}, \Theta_{\mathrm{cool}}, \Theta_{\mathrm{aux}})\)
by maintaining consistency in representative statistical patterns between real operational loads and simulated load trajectories, rather than by directly aligning the time-series trajectory.
This is because, data center loads are unlabeled and inherently stochastic. In such cases, point-wise trajectory alignment is prone to overfitting and can lead to unrealistic load behavior during load simulation. We formalize this limitation in Lemma~\ref{lem:impulse_nonident} using a simplified workload model  from \eqref{eq:eta}.

\begin{lemma}[Overfitting risk of trajectory alignment for data center workloads]
\label{lem:impulse_nonident}
Consider the core impulse-driven component of a data center workload modeling in Eq.~(\ref{eq:eta}):
\(
I(t)=\sum_{k}\delta(t-t_k)
\)
over a finite time horizon \([0,T]\), where
\(\{t_k\}\) are the job-arrival times of a homogeneous Poisson process with rate \(\lambda^{\ast}>0\).
A single time-series sequence of the impulse \(I(t)\) over \([0,T]\) does not uniquely identify the generating rate \(\lambda^{\ast}\).
\end{lemma}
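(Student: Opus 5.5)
The approach is to show that the map from the rate $\lambda$ to the distribution of an observed impulse sequence on $[0,T]$ is such that every realization has positive likelihood under every rate. An observed sequence therefore cannot single out $\lambda^{\ast}$.

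First, I would reduce the observation to its sufficient content. A realization of $I(t)=\sum_k\delta(t-t_k)$ on $[0,T]$ is exactly the finite point set $\{t_1<\dots<t_n\}$, with $n=N(T)$. By the standard characterization of the homogeneous Poisson process, the likelihood of this realization under rate $\lambda$ (the Janossy density with respect to the unit-rate reference) is
\[
L(\lambda\mid t_1,\dots,t_n)=\lambda^{n}e^{-\lambda T}.
\]
This step makes two points. The likelihood depends on the data only through $n$. Conditioned on $n$, the arrival times are i.i.d.\ uniform on $[0,T]$ regardless of $\lambda$, so the timing information carries nothing about the rate.

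Second, I would argue non-identifiability in the sense relevant to trajectory alignment. For every $\lambda>0$ and every finite configuration, the likelihood is strictly positive. Hence the laws of the process under different rates are mutually absolutely continuous on $[0,T]$: the Radon--Nikodym derivative $(\lambda/\lambda')^{n}e^{-(\lambda-\lambda')T}$ is finite and positive. Thus no observed sequence is compatible with $\lambda^{\ast}$ while being excluded by some $\lambda'\neq\lambda^{\ast}$, and any rule $\hat\lambda(\cdot)$ applied to the trajectory is wrong with positive probability. To make this concrete, I would note that the trajectory-fitting estimate is the maximizer $\hat\lambda=n/T$. Since $n\sim\mathrm{Poisson}(\lambda^{\ast}T)$, we get $\Pr(\hat\lambda\neq\lambda^{\ast})>0$; when $\lambda^{\ast}T\notin\mathbb{Z}$ this probability equals $1$. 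In particular, $n=0$ occurs with probability $e^{-\lambda^{\ast}T}>0$ and gives $\hat\lambda=0$, which lies outside the parameter space. Finally, $\operatorname{Var}(\hat\lambda)=\lambda^{\ast}/T$ does not vanish for finite $T$, so exact alignment to one realization fits its noise rather than the rate. This is the overfitting claim.

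The main obstacle is making ``does not uniquely identify'' precise. Identifiability in the classical sense, meaning that the map $\lambda\mapsto$ law is injective, does hold for Poisson processes. The statement must therefore be read at the level of a single realization. I would state explicitly that the claim concerns the inverse map from one sample path to $\lambda$, and prove the positive-likelihood and positive-error properties above. I would remark that consistency requires $T\to\infty$ or many independent windows, which motivates the statistical, window-based alignment of Section~\ref{sec:pattern-workload}.
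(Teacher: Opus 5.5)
Your proposal is correct and rests on the same observation as the paper's proof: the likelihood of the observed path is strictly positive under every $\lambda>0$, so infinitely many rates are compatible with a single realization. You use the path density $\lambda^{n}e^{-\lambda T}$ where the paper writes the count probability $e^{-\lambda T}(\lambda T)^{n}/n!$; the two are proportional in $\lambda$, and your remark that classical identifiability holds makes explicit the per-realization reading the paper leaves implicit.

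One small fix: mutual absolute continuity shows only that no rule $\hat\lambda$ can be almost surely correct at two distinct rates. It does not show that every rule errs with positive probability under $\lambda^{\ast}$ itself, since the constant rule $\hat\lambda\equiv\lambda^{\ast}$ never does; your concrete argument for the MLE $\hat\lambda=n/T$ is fine as stated.
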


\begin{proof}
Under any homogeneous Poisson process with rate \(\lambda>0\), the likelihood of observing a time-series sequence \(I(t)\) over \([0,T]\) with job-arrival times \(0<t_1<\cdots<t_n<T\) is:
\(
\mathbb{P}(N(T)=n)=e^{-\lambda T}(\lambda T)^n/n!,
\)
which is strictly positive.
Hence infinitely many values of \(\lambda\) are compatible with the same observed sequence, and the generating rate cannot be uniquely identified from a single training sequence of \(I(t)\).
\end{proof}

Lemma~\ref{lem:impulse_nonident} indicates that trajectory-level alignment constrains only a single realization and is insufficient to recover the underlying data center workload dynamics, motivating the use of richer statistical features beyond the raw load trajectory.
Moreover, data center loads exhibit strong temporal locality and regime-switching behavior \cite{0929chen2025electricity}, motivating a window-based representation that can distinguish among operating regimes (e.g., burst, idle, sustained) while remaining invariant to nuisance variations such as small timing shifts and amplitude scaling. 
To this end, we extract representative temporal patterns from $\{x(t)\}_{t=1}^T$ using temporal contrastive learning (TCL)~\cite{liu2024timesurl},
where $\{x(t)\}_{t=1}^T$ denote the time-series load signal associated with a given subsystem, comprising active and reactive power measurements when available. 
TCL provides an unsupervised, window-based representation that enforces similar embeddings for augmented views of the same operational trace while separating representations of distinct traces, naturally satisfying the above requirements.

In TCL, the operational data $\{x(t)\}_{t=1}^T$ is segmented into $N$ overlapping windows
\(
x^{(i)} = \{x(t)\}_{t=t_i}^{t_i+L-1}, \ i=1\cdots N,
\)
where $L=\lceil T/(2N) \rceil$ denotes the window length, and $t_i = 1+(i-1)L$ is the starting index of the $i$-th window. For each window $x^{(i)}$, two stochastic augmentations $\tilde{x}^{(i)}_1$ and $\tilde{x}^{(i)}_2$ are generated via random amplitude scaling and additive noise. 

The augmented views $(\tilde{x}^{(i)}_1 , \tilde{x}^{(i)}_2)$ of the same window form a positive pair, while augmented windows associated with different indices $(\tilde{x}^{(j)}_k , \tilde{x}^{(j)}_k), k\in\{1,2\}$ form negative pairs.
The objective is to obtain similar representations for positive pairs and distinct representations for negative pairs.
To this end, an encoder network $\phi_{\psi}$ with parameters $\psi$ maps each augmented window to a representation $z^{(i)}_k\in \mathbb{R}^d$:
\begin{equation}
z^{(i)}_k = \phi_{\psi}(\tilde{x}^{(i)}_k), \ i=1\cdots N, \  k\in\{1,2\}.
\end{equation}
This encoder is trained by minimizing the contrastive loss
\(
\sum_{i=1}^N
-\log
\frac{
\exp\!\big(\mathrm{sim}(z^{(i)}_1,z^{(i)}_2)/\tau\big)
}{
\sum_{j=1}^N
\exp\!\big(\mathrm{sim}(z^{(i)}_1,z^{(j)}_2)/\tau\big)
},
\)
where $\mathrm{sim}(\cdot,\cdot)$ denotes cosine similarity and $\tau$ is a temperature parameter.

After training, the encoder is fixed and applied to all windows $x^{(i)}$, yielding a set of representations $\{z_i\}_{i=1}^N$ with $z_i\in\mathbb{R}^d$. Each $z_i$ captures the representative temporal pattern of the $i$-th window.
To aggregate window-level information into a description of the entire time-series load, low-order statistics of $\{z_i\}_{i=1}^N$ are sufficient.
Specifically, we compute the first- and second-order elementwise moments of the window representations to form the statistical pattern vector
\begin{equation}\label{eq:pattern-vector}
s =
\begin{bmatrix}
\frac{1}{N}\sum_{i=1}^{N} z_i \\
\frac{1}{N}\sum_{i=1}^{N} [z_i - (\frac{1}{N}\sum_{i=1}^{N} z_i)]^{\odot 2}
\end{bmatrix}
\in \mathbb{R}^{2d},
\end{equation}
where $(\cdot)^{\odot 2}$ indicates elementwise squaring across representation dimensions.

For a given subsystem (workload, cooling, and auxiliary), let 
$\{x_{\mathrm{data}}(t)\}_{t=1}^T$ denote the real operational data and $\{x^{\Theta}_{\mathrm{model}}(t)\}_{t=1}^T$ denote the simulated trajectory based on the uncalibrated parameter $\Theta \in \{\Theta_{\mathrm{work}}, \Theta_{\mathrm{cool}}, \Theta_{\mathrm{aux}}\}$. 
The corresponding statistical pattern vectors $s_{\mathrm{data}}$ and $s_{\mathrm{model}}(\Theta)$ are obtained using Eq. \eqref{eq:pattern-vector}.
The pattern-consistent calibration is then performed by minimizing
\begin{equation}
\Theta^{\star}
=
\arg\min_{\Theta}
\big\|
s_{\mathrm{data}}-
s_{\mathrm{model}}(\Theta)
\big\|_2^2.
\end{equation}

\subsection{Rule-Based Specification of Protection Parameters}
\label{sec:pattern-protection}

Calibration of protection-related parameters $\Theta_{\mathrm{prot}}$ differs fundamentally from that of workload, cooling, and auxiliary subsystems. 
Protection-driven disconnection and recovery-driven reconnection events are triggered infrequently, and are often absent from routine operational data~\cite{shamseldein2025liability}. 
As a result, attempting to estimate protection parameters purely from data is unreliable and can lead to unstable or nonphysical calibration results. 
Instead, these parameters are specified directly from facility protection settings, which are explicitly defined in practice and reflect established operational constraints while avoiding dependence on rare-event observations.

We therefore adopt a rule-based calibration approach for the protection parameter set \(\Theta_{\mathrm{prot}}\). 
Specifically, each LEL facility provides \(\Theta^{\ast}_{\mathrm{prot}}\) directly through a standardized disclosure form that specifies its voltage and frequency trip thresholds, enforced time delays, throttling and reconnection ramp rates, and reconnection limits. 
These parameters encode the facility’s internal protection settings and operational constraints without exposing raw operational data.

\begin{figure}[h]
    \centering
    \vskip -0.1in
    \includegraphics[width=1\linewidth]{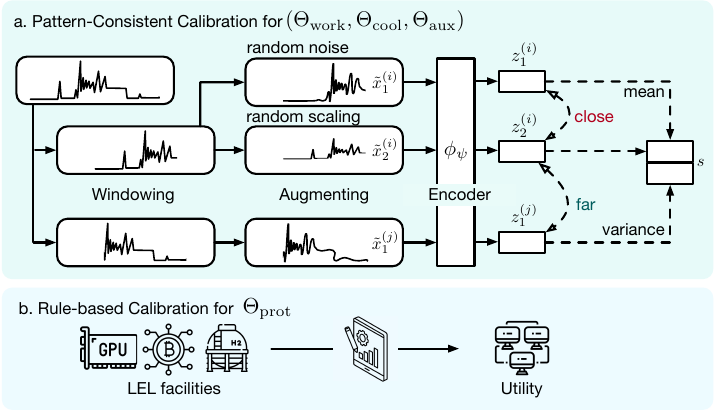}
    \vskip -0.1in
    \caption{Pattern-consistent calibration of LEL modeling parameters.}
    \label{fig:pattern-match}
\end{figure}

The overall calibration workflow is illustrated in Fig.~\ref{fig:pattern-match}. 
The resulting calibrated parameter set
$\{\Theta^{\ast}_{\mathrm{work}},
\Theta^{\ast}_{\mathrm{cool}},
\Theta^{\ast}_{\mathrm{aux}},
\Theta^{\ast}_{\mathrm{prot}}\}$
ensures that the load model reproduces the empirical statistical structure observed in real load data, which is essential for accurate grid simulations in practice. 
After calibration, system-level assessment of LEL impacts becomes feasible, as each LEL site can independently obtain a calibrated load model. 
Section~\ref{sec:numerical} integrates multiple calibrated LEL instances into the \texttt{ANDES}~\cite{ANDES} simulation platform to evaluate their collective effects on transient stability, disturbance propagation, and post-disturbance recovery behavior.

\section{Numerical Results}
\label{sec:numerical}

This section evaluates the proposed load modeling framework together with the pattern-consistent calibration approach, hereafter referred to as the calibrated LEL model.
The evaluation is designed to assess whether calibrated load models reproduce realistic LEL behavior observed in real-world data, and how such behavior influences system-level responses under transient disturbances.

\subsection{Experiment Setup}
Evaluating the realism of calibrated LEL models using real-world workload traces from multiple large-scale computing facilities is a critical prerequisite for effective calibration.
Specifically, workload data are drawn from the Massachusetts Institute of Technology (MIT) Supercloud dataset~\cite{MIT-data}, the Arizona State University (ASU) Sol high-performance computing cluster, and the Blue Waters system monitoring dataset~\cite{BlueWatersData}. 
These datasets provide available CPU and GPU utilization measurements with associated power usage, covering diverse computing environments and operational regimes. 
For cooling and auxiliary subsystems, we adopt representative profiles reported by ASHRAE~\cite{ASHRAE_kaggle}, as detailed cooling measurements are typically not shared by data center operators. 

For each workload trace, the calibrated LEL model with pattern-consistent calibration is compared against representative baselines from both data-driven and physics-based modeling categories. 
For data-driven approaches, we consider a statistical Hidden Markov Model (HMM)~\cite{HMM} and a learning-based recurrent neural network (RNN)~\cite{kumar2018long}. 
For physics-based modeling, we select a recent data center load model~\cite{0522jimenez2025data} as an uncalibrated baseline.

To study the interaction between LEL facilities and grid dynamics, transient simulations are conducted using \texttt{ANDES}~\cite{ANDES}, which supports customized differential-algebraic load subsystems and is well suited for large-scale transient stability analysis. 
Simulations are performed mainly on the IEEE 39-bus system, where a fixed number $K$ of buses are randomly selected and assigned as LEL sites. 
Each site is instantiated as one of three representative LEL representatives: data center, cryptocurrency mining facility, or hydrogen electrolyzer~\cite{NERC_LLTF_2024}. 

A three-phase fault is applied at a randomly selected bus at $5.0$~s and cleared after 100~ms To induce system dynamic responses.
For reference, system responses obtained using the calibrated LEL model are compared against those produced by a conventional ZIP load model, representing traditional industrial loads such as steel mills, and the uncalibrated physics-based data center model~\cite{0522jimenez2025data}. Sensitivity analysis with respect to the LEL penetration level $K$ is presented in Section~\ref{sec:LEL_ratio}, while scalability is examined in Section~\ref{sec:scalability} by extending the simulations to the NPCC 140-bus and WECC 179-bus benchmark systems.

Implementation details are as follows. Simulations are conducted using \texttt{Python~3.12} and \texttt{ANDES~v1.9.3}, with a fixed integration time step of $\Delta t = 1$~ms and a total simulation horizon of $T_{\text{sim}} = 40$~s. Differential-algebraic equations are solved using \texttt{ANDES}’s implicit trapezoidal integration scheme with Newton-Raphson iterations at each time step. For the TCL feature module, we use a temporal window length of $L=5$ and a feature dimension of $d=64$.

\subsection{Validation of Pattern-Consistent Calibration}
\label{sec:validation-calibration}

\begin{table*}[ht]
\centering
\caption{Shape similarity metrics across real workload, cooling load, and auxiliary load datasets. All simulated traces are normalized prior to metric computation.}
\label{tab:shape-similarity-big}
\scriptsize
\setlength{\tabcolsep}{6pt}
\begin{tabular}{llccccc}
\toprule
\textbf{Metric} &
\textbf{Model} &
\textbf{MIT Load}~\cite{MIT-data} &
\textbf{ASU Load} &
\textbf{BlueWaters Load}~\cite{BlueWatersData} &
\textbf{Cooling Load} \cite{ASHRAE_kaggle} &
\textbf{Auxiliary Load} \cite{ASHRAE_kaggle} \\
\midrule

\multirow{4}{*}{DTW Distance ($\downarrow$)}
& HMM                                & 1.407 & 31.068 & 4.097 & 2.943 & 0.472 \\
& RNN                                & 2.062 & 14.902 & 4.878 & 1.904 & 0.743 \\
& Physics-based (Uncalibrated)       & 2.668 & 17.928 & 6.568 & 16.552 & 1.148 \\
& Ours (Data-calibrated Physics)     & \textbf{1.095} & \textbf{1.278} & \textbf{3.301} & \textbf{1.783} & \textbf{0.308} \\
\midrule

\multirow{4}{*}{Max Cross-Corr. ($\uparrow$)}
& HMM                                & -0.067 & 0.232 & 0.121 & 0.775 & 0.965 \\
& RNN                                & -0.302 & 0.962 & 0.214 & 0.847 & 0.961 \\
& Physics-based (Uncalibrated)       & -0.179 & 0.200 & -0.011 & 0.278 & 0.088 \\
& Ours (Data-calibrated Physics)     & \textbf{0.524} & \textbf{0.981} & \textbf{0.219} & \textbf{0.920} & \textbf{0.970} \\
\midrule

\multirow{4}{*}{Cosine Similarity ($\uparrow$)}
& HMM                                & 0.994 & 0.998 & \textbf{0.985} & 0.996 & 0.999 \\
& RNN                                & 0.828 & 0.998 & 0.580 & 0.995 & 0.997 \\
& Physics-based (Uncalibrated)       & 0.888 & 0.987 & 0.649 & 0.993 & 0.990 \\
& Ours (Data-calibrated Physics)     & \textbf{0.996} & \textbf{0.999} & 0.950 & \textbf{0.998} & \textbf{0.999} \\
\bottomrule
\end{tabular}
\end{table*}

We first validate the proposed pattern-consistent calibration framework by examining its ability to reproduce realistic workload, cooling, and auxiliary load dynamics observed in real-world data. 
For fair comparison, each load trace is split into disjoint training and testing segments. 
The data-driven baselines, HMM~\cite{HMM} and RNN~\cite{kumar2018long}, are trained on the training segment and evaluated using one-step-ahead prediction on the testing segment. 
The calibrated LEL model uses the same training data for parameter calibration and is then simulated forward over the testing period, while the uncalibrated physics-based baseline~\cite{0522jimenez2025data} is simulated on the testing segment using default parameters.

For workload validation, we use traces from the MIT Supercloud~\cite{MIT-data}, the ASU Sol, and the Blue Waters dataset~\cite{BlueWatersData}. 
The results are shown in Fig.~\ref{fig:ou_match}. 
The uncalibrated physics-based baseline produces overly oscillatory trajectories that fail to capture the temporal structure of empirical workloads across all three datasets, indicating unrealistic LEL behavior that can mislead grid-impact assessments. 
The data-driven baselines, HMM and RNN, reproduce some local patterns but tend to regress toward average behavior during the testing period. 
For example, although both models align with the workload trajectory during a relatively steady operating phase (around $t \approx 100$), they fail to reproduce the subsequent sustained workload increase associated with a burst event in the ASU Sol dataset.
As a result, these models have limited ability to capture regime transitions and bursty workload dynamics. 

In contrast, the calibrated LEL model does not aim for pointwise accuracy. 
Instead, its trajectories align with the dominant temporal patterns observed across all datasets while capturing burstiness and variability. 
This behavior stems from the physics-based workload formulation and indicates that the calibrated parameter set represents physically meaningful workload dynamics rather than a purely statistical fit.

\begin{figure}[h]
    \centering
    \vskip -0.1in
    \includegraphics[width=1\linewidth]{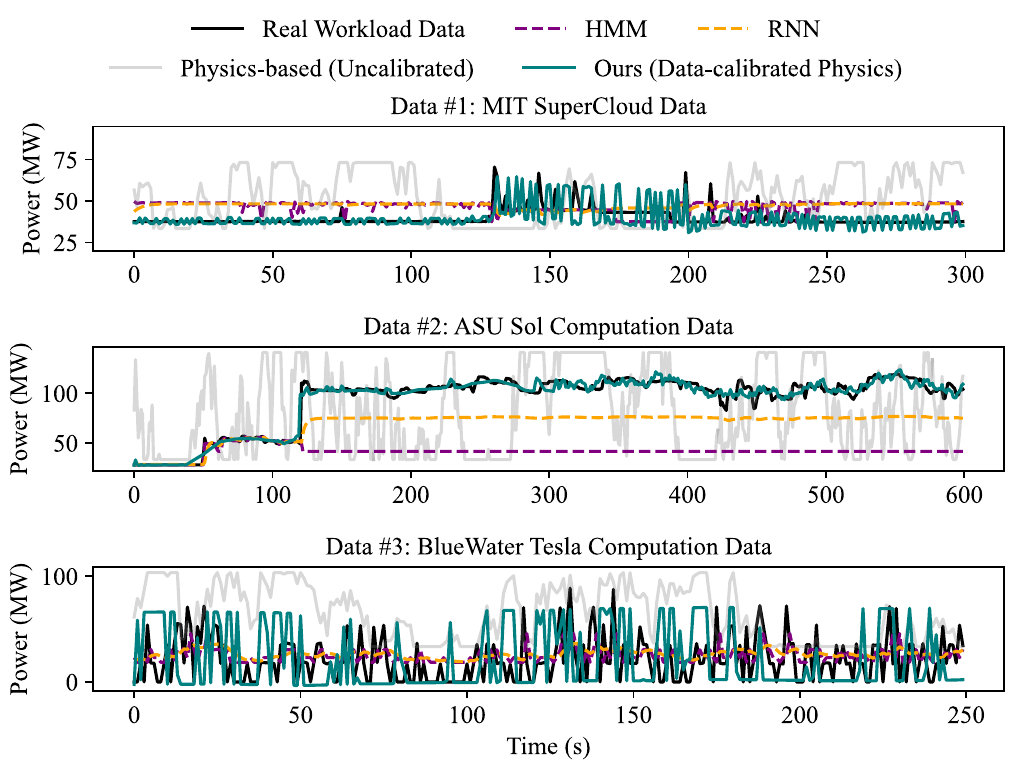}
    \vskip -0.1in
    \caption{Comparison between real CPU/GPU workload traces and simulated workload trajectories for three computing sites.}
    \label{fig:ou_match}
    \vskip -0.1in
\end{figure}

For cooling and auxiliary loads, we use measured cooling-demand profiles and reconstructed auxiliary-load trajectories reported by ASHRAE~\cite{ASHRAE_kaggle}. The results are shown in Fig.~\ref{fig:cooling-pattern-match} and Fig.~\ref{fig:aux-pattern-match}. The uncalibrated cooling model and default ZIP-based auxiliary model produce smooth or nearly constant responses that fail to capture the abrupt activation and high variability observed in empirical data. Data-driven baselines, such as HMM and RNN, while capable of fitting training data, do not generalize reliably to these subsystems due to the absence of explicit physical structure and are therefore not suitable for reliable transient simulation. After applying the proposed pattern-consistent calibration, the simulated cooling trajectories capture elevated operating regimes and bursty fluctuations characteristic of motor-driven cooling systems, which are not reproduced by data-driven baselines or uncalibrated physics-based models. The calibrated auxiliary load similarly exhibits improved alignment in both amplitude and temporal variation.

\begin{figure}[h]
    \centering
    \vskip -0.15in
    \includegraphics[width=1\linewidth]{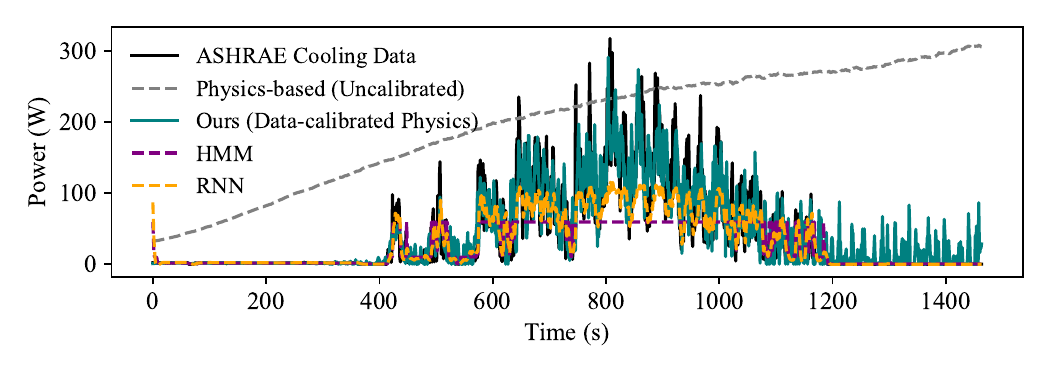}
    \vskip -0.2in
    \caption{Empirical cooling load traces and simulated cooling trajectories.}
    \label{fig:cooling-pattern-match}
    \vskip -0.1in
\end{figure}

\begin{figure}[h]
    \centering
    \vskip -0.15in
    \includegraphics[width=1\linewidth]{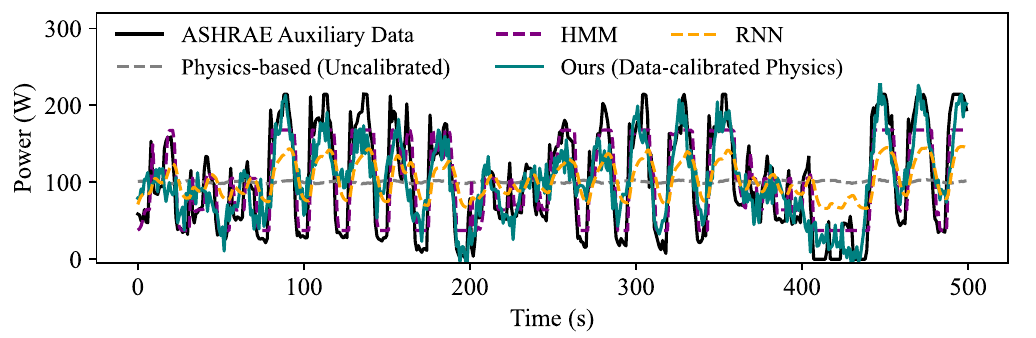}
    \vskip -0.1in
    \caption{Empirical auxiliary load traces and simulated auxiliary trajectories.}
    \label{fig:aux-pattern-match}
    \vskip -0.1in
\end{figure}

To quantify similarity between simulated and empirical load trajectories, we use three complementary shape-based metrics: dynamic time warping (DTW) distance, maximum cross-correlation, and cosine similarity. These metrics assess temporal and structural alignment rather than pointwise prediction accuracy, which is not required for system-level dynamic analysis. As summarized in Table~\ref{tab:shape-similarity-big}, the proposed calibrated physics-based model consistently achieves lower DTW distance and higher cross-correlation and cosine similarity than all baseline models across workload, cooling, and auxiliary datasets. While HMM and RNN baselines perform competitively in isolated metrics or datasets, their performance is less consistent across operating regimes and subsystems.

\subsection{Robustness to Parameter Initialization}

Another key to practical deployment of the proposed calibration framework is the robustness to initialization of the parameter set $\{\Theta^{\ast}_{\mathrm{work}}, \Theta^{\ast}_{\mathrm{cool}}, \Theta^{\ast}_{\mathrm{aux}}\}$.
Such facility-level parameters are rarely known to utility operators and are typically initialized using industry averages or standard heuristics.
We evaluate initialization robustness using the MIT Supercloud dataset by running 20 trials with randomly sampled initial parameters, and compare against an uncalibrated physics-based baseline~\cite{0522jimenez2025data}.
As shown in Fig.~\ref{fig:various_init}, the baseline model exhibits large dispersion across different initial conditions since it naturally lacks a parameter calibration process.
The proposed calibration, on the other hand, shows the ability to converge toward a common, empirically consistent trajectory with significantly reduced variance. 
This robustness is essential for achieving stable, facility-specific load predictions despite large uncertainty in parameter initialization.

\begin{figure}[h]
    \centering
    \vskip -0.1in
    \includegraphics[width=1\linewidth]{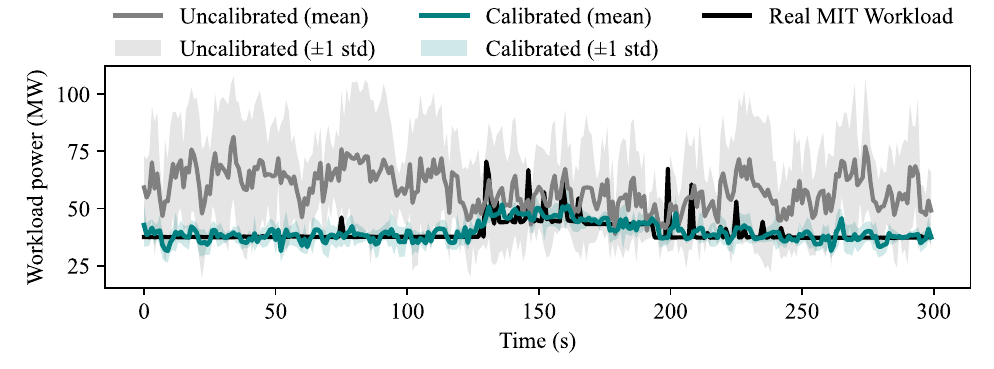}
    \vskip -0.2in
    \caption{Baseline and pattern-consistent calibrated LEL workload models under varying parameter initializations.}
    \label{fig:various_init}
    \vskip -0.1in
\end{figure}

\subsection{Effect and Sensitivity of the TCL Feature Module}

The proposed calibration framework employs TCL to align real and simulated load trajectories in feature space, focusing on temporal patterns and overall magnitude rather than pointwise matching.
We evaluate the effectiveness of TCL through a series of ablation and sensitivity studies. Specifically, we examine:
(1) feature representation (TCL versus hand-crafted statistics),
(2) calibration loss (feature-wise TCL loss versus sequence-level MSE),
(3) representation scope (window-based TCL versus autoencoder),
(4) augmentation strategy (full, noise-only, scale-only),
(5) temporal window length ($L=3,5,10$), and
(6) feature dimension ($d=16,64,256$).
In Fig.~\ref{fig:ablation}, we report the TCL contrastive loss and the DTW distance between real and calibrated simulated trajectories.

The full TCL with $L=5$ and $d=64$ yields the lowest DTW distance among all methods. In contrast, calibration based on pure MSE loss, hand-crafted features, or autoencoder representations results in substantially larger DTW errors, indicating poorer recovery of temporal patterns. Sensitivity results further show that window-based feature extraction is critical: overly short or long windows degrade load alignment. This is probably because data center loads exhibit bursty multi-timescale dynamics that could be underrepresented by overly short windows and oversmoothed by excessively long windows.
Meanwhile, extreme feature dimensions provide limited benefit. Overall, these results confirm that TCL offers a compact and effective representation for load calibration, enabling consistent patterns while maintaining robustness across modeling choices.

\begin{figure}[h]
    \centering
    \vskip -0.1in
    \includegraphics[width=1\linewidth]{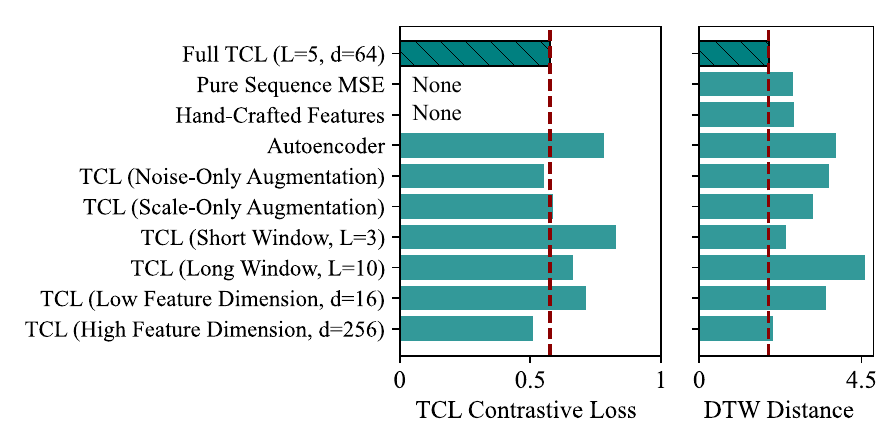}
    \vskip -0.1in
    \caption{Ablation and sensitivity analysis of the TCL-based calibration.}
    \label{fig:ablation}
    \vskip -0.1in
\end{figure}

\begin{figure*}
    \centering
    \vskip -0.1in
    \includegraphics[width=1\linewidth]{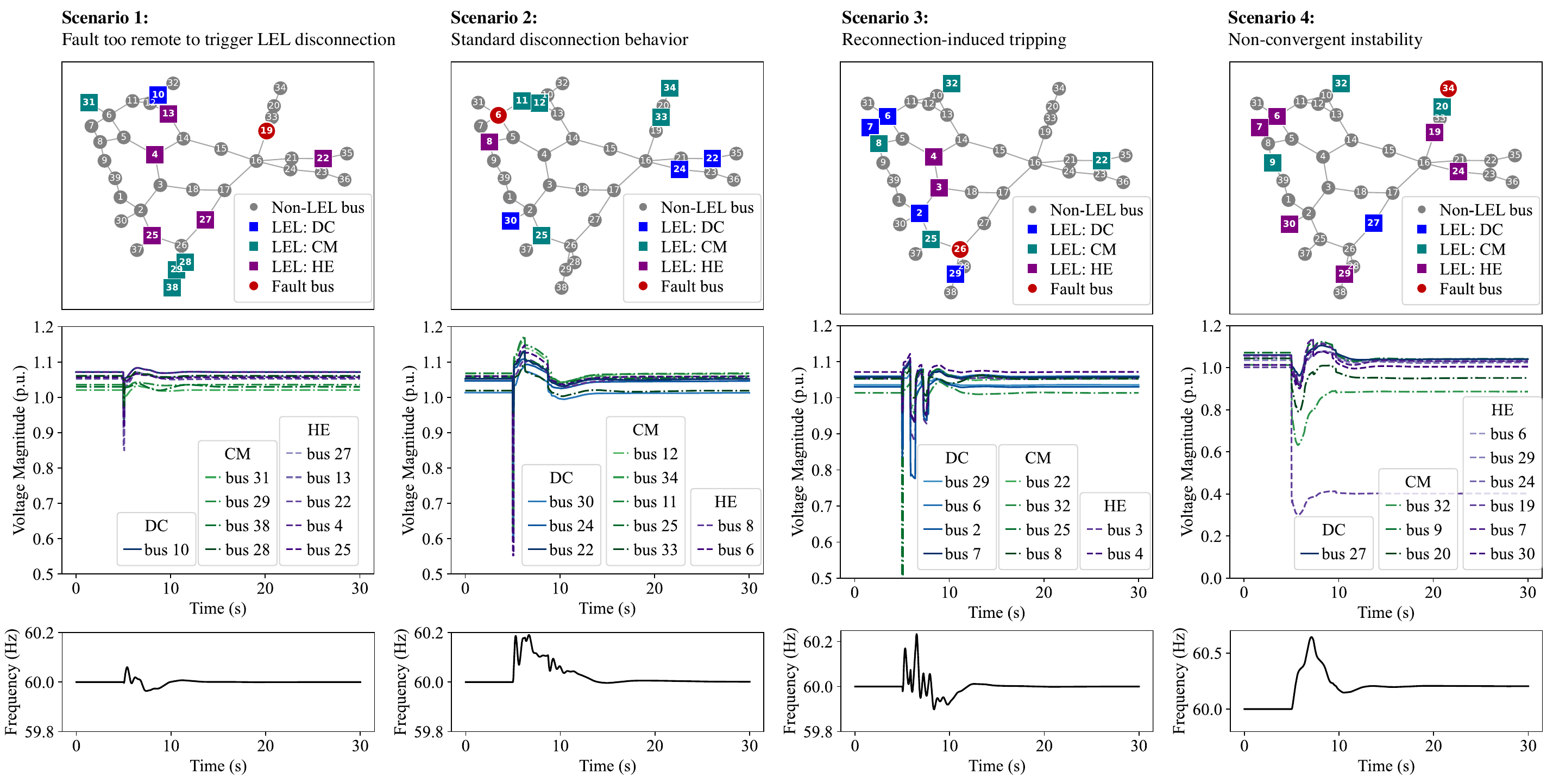}
    \vskip -0.2in
    \caption{Fault-induced transient responses in the IEEE 39-bus system with $K=10$ randomly placed LELs.}
    \label{fig:4scenarios}
    \vskip -0.2in
\end{figure*}
\subsection{System-Level Dynamic Response with heterogeneous LEL}
\label{sec:heterogeneous}
After validating the accuracy of the proposed load modeling approach, we integrate the calibrated load models into \texttt{ANDES}~\cite{ANDES} to evaluate system-level dynamic responses with heterogeneous LELs connected to the grid. 
Interactions among heterogeneous LELs can significantly influence grid dynamics, yet remain largely underexplored in existing studies.
We consider the IEEE 39-bus system and randomly select $K=10$ buses as LEL sites to reflect their increasing penetration.
Each selected bus is randomly assigned to represent a data center, a cryptocurrency mining facility, or a hydrogen electrolyzer. 
Across these experiments, we observe four qualitatively distinct system response regimes with increasing severity, which are summarized in Fig.~\ref{fig:4scenarios}.
In \emph{Scenario~1}, LEL buses are electrically distant from the simulated fault, resulting in only modest voltage sags and allowing all LELs to ride through the disturbance without triggering protection mechanisms. 
In \emph{Scenario~2}, the voltage sag is deep enough to exceed protection thresholds, resulting in near-simultaneous disconnections of multiple LELs. The sudden loss of load causes the system frequency to rise above nominal and then gradually return after fault clearance. This response closely mirrors utility-reported disturbance events involving LELs~\cite{ERCOT_LEL_RideThrough_2025, shamseldein2025liability}.

In \emph{Scenario~3}, more critical interaction effects emerge: following the initial disconnection, the reconnection of one data center (bus 29) induces an additional voltage sag for one hydrogen electrolyzer (bus 3), which in turn triggers its protection-driven disconnection. This compound disconnection-reconnection sequence amplifies system stress and increases the risk of cascading failure.
In \emph{Scenario~4}, persistent voltage and frequency deviations (with no additional grid recovery actions) prevent one LEL from reconnecting. It indicates the absence of a feasible post-fault operating equilibrium. The sustained loss of load, together with unresolved frequency deviation, ultimately drives the system toward collapse in simulation.

Such a spectrum of transient behaviors is difficult to capture with uncalibrated or homogeneous load representations.
The calibrated modeling framework reveals how heterogeneous LELs can fundamentally reshape post-disturbance recovery and amplify localized faults into system-level events.

   \vskip -0.3in
\subsection{Impact of Load Model Calibration}
\label{sec:baseline}

To examine how different load modeling choices impact transient simulation results, we compare the proposed LEL model with two representative baselines under identical fault scenarios on the IEEE 39-bus system: a conventional ZIP load model and a physics-based LEL model with protection logic~\cite{0522jimenez2025data}.
As shown in Fig.~\ref{fig:comparison}, the ZIP model fails to trigger load tripping due to the absence of protection mechanisms, rendering it unsuitable for LEL transient analysis. The physics-based baseline captures protection-driven disconnection but yields early load restoration that may underestimate post-fault system stress.
This is because, without calibration, both the magnitude and temporal evolution of load shedding are misrepresented. 
In contrast, the proposed calibrated model yields staged reconnection behavior consistent with reported utility events \cite{ERCOT_LEL_RideThrough_2025, shamseldein2025liability}. It correctly identifies extended recovery intervals and potential constraint violations that are missed by the baseline.
These differences indicate that the proposed calibration captures system behavior more realistically, leading to improved operational assessment compared with existing approaches.

\begin{figure}[h]
    \centering
    \vskip -0.1in
    \includegraphics[width=1\linewidth]{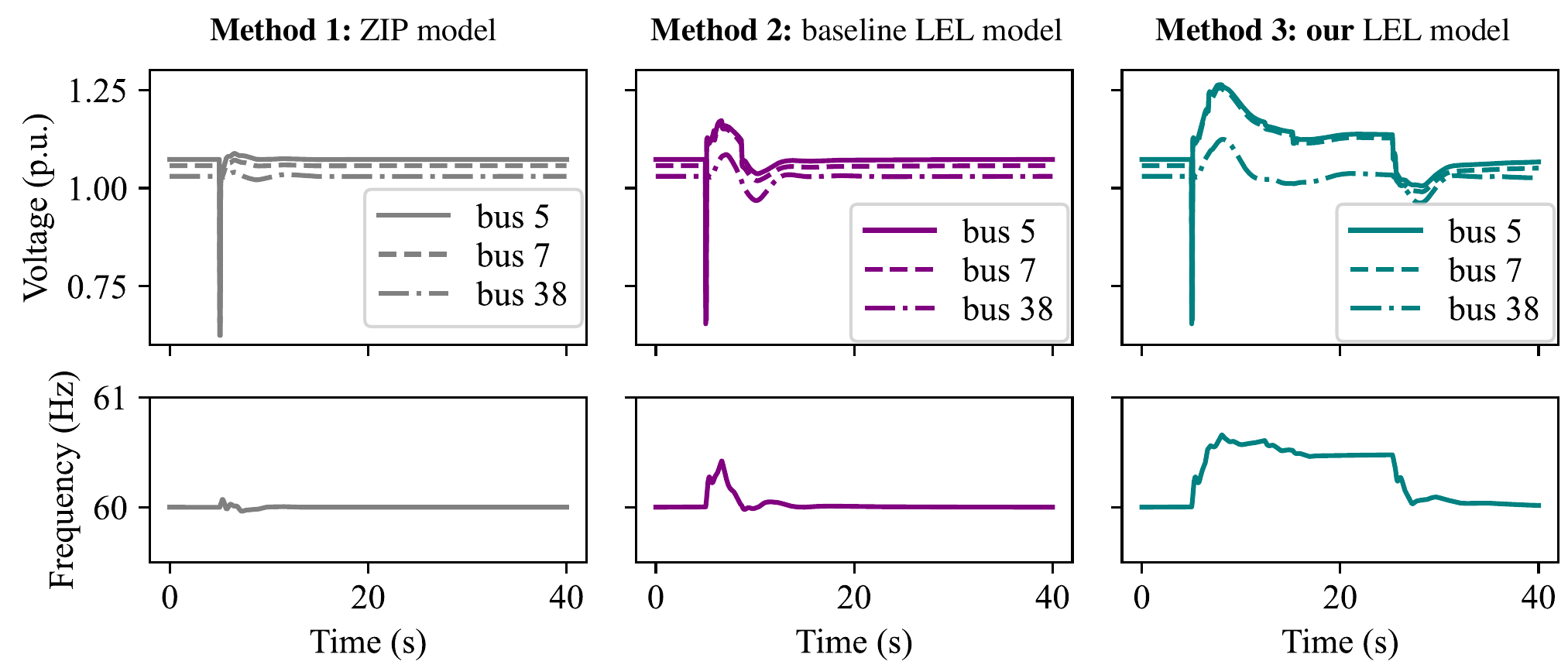}
    \vskip -0.1in
    \caption{Comparison of proposed LEL modeling with two baseline load models under identical LEL placement in the IEEE 39-bus system.}
    \label{fig:comparison}
    \vskip -0.1in
\end{figure}

To isolate the impact of calibration, we compare the proposed LEL model with and without calibration process. 
System-level responses are quantified using three metrics: 
(1) \emph{voltage nadir}, defined as the minimum post-fault bus voltage, 
(2) \emph{frequency overshoot}, defined as the maximum frequency deviation during recovery, and 
(3) \emph{reconnection delay}, defined as the time between fault clearing and successful LEL reconnection, which reflects post-disturbance load restoration difficulty.
We perform 20 trials under different LEL placements ($K=10$) and fault locations, which are kept identical across calibrated and uncalibrated simulations to ensure a fair comparison.
As shown in Fig.~\ref{fig:violion}. the calibrated model produces tightly clustered responses across all metrics, whereas the uncalibrated baseline exhibits much larger dispersion.
The reduced variability indicates that calibration yields repeatable and statistically stable dynamic responses. This is critical for practical grid studies, where planning, contingency screening, and protection assessment require reliable and interpretable simulation outcomes.

\begin{figure}[H]
    \centering
    \vskip -0.1in
    \includegraphics[width=1\linewidth]{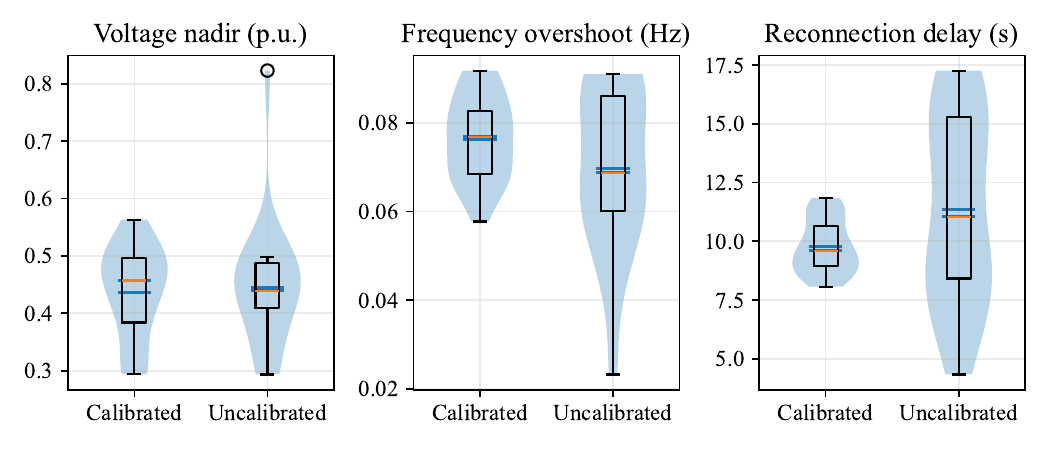}
    \vskip -0.1in
    \caption{Comparison between the proposed LEL model with and without the calibration process.}
    \label{fig:violion}
    \vskip -0.1in
\end{figure}

\subsection{Scalability to Large-Scale Power Systems}
\label{sec:scalability}

To evaluate the stability of the proposed model across systems of different scales, we conduct simulations on two larger benchmark systems: the NPCC 140-bus and WECC 179-bus networks. In both cases, the number of LEL sites is fixed to $K=10$, with locations and types randomly assigned following the same modeling and calibration procedures used for the IEEE 39-bus system. 

Simulation results in Fig. \ref{fig:scalability} confirm that LEL disconnections are consistently reproduced in both large-scale systems, demonstrating that the proposed framework generalizes beyond small test networks. 
Compared to the IEEE 39-bus case, two systematic differences are observed. 
First, post-fault voltage and frequency oscillations exhibit reduced peak magnitudes. 
Second, recovery dynamics are more prolonged in time. 
These effects are physically consistent with properties of larger interconnected systems. 
Increased network size and inertia reduce the immediate impact of localized faults on electrically distant buses, leading to smaller instantaneous deviations. 
However, after LEL disconnections occur, restoring equilibrium requires coordination over a wider area, which leads to slower recovery and longer-lasting oscillations.

\begin{figure}[h]
    \centering
    \vskip -0.1in
    \includegraphics[width=1\linewidth]{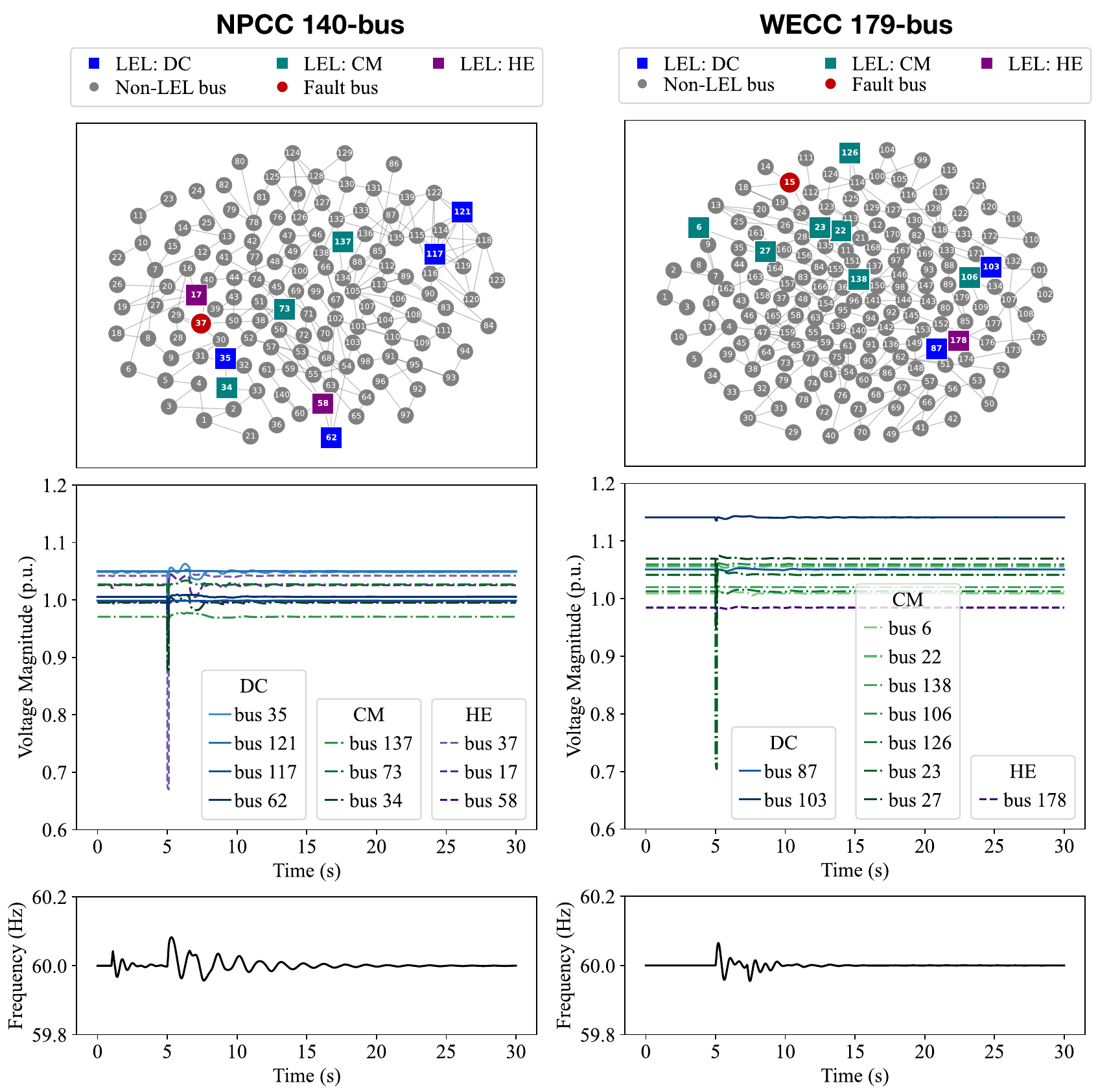}
    \vskip -0.1in
    \caption{Fault-induced transient responses in the NPCC 140-bus and WECC 179-bus systems with $K=10$ randomly placed LELs.}
    \label{fig:scalability}
    \vskip -0.25in
\end{figure}

\subsection{Sensitivity to LEL Penetration Level}
\label{sec:LEL_ratio}

We further examine how increasing LEL penetration influences system dynamics using three system-level metrics: voltage nadir, frequency overshoot, and reconnection delay (see Section~\ref{sec:baseline} for definitions).
Different penetration levels are represented by varying the value of $K$ in the WECC 179-bus system, while keeping all other modeling and disturbance settings fixed. Fig.~\ref{fig:compare-LEL-ratios} summarizes the resulting trends. 

As LEL penetration increases, all three metrics exhibit systematic degradation: voltage nadirs decrease, frequency overshoots increase, and reconnection delays become longer. These trends indicate that higher LEL presence amplifies both the immediate severity of fault-induced disturbances and the difficulty of system recovery. Notably, the proposed calibrated model reveals smooth and monotonic relationships between penetration level and system response, enabling clear interpretation of how incremental LEL deployment impacts grid dynamics. In contrast, the same LEL model without calibration process produce irregular and inconsistent trends that obscure these relationships. This sensitivity analysis demonstrates that increasing LEL penetration fundamentally alters disturbance and recovery characteristics, and that accurate, calibrated load modeling is essential for reliably assessing these effects in planning and protection studies.

\begin{figure}[h]
    \centering
    \vskip -0.2in
    \includegraphics[width=1\linewidth]{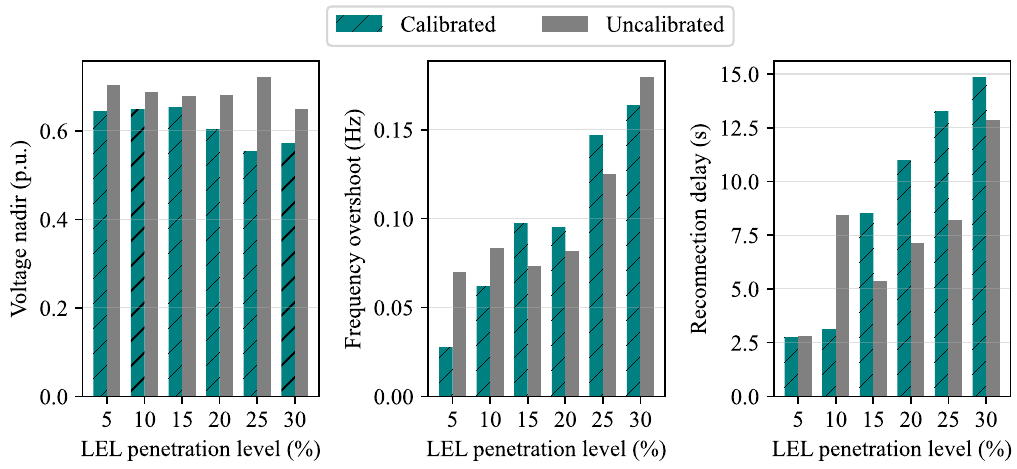}
    \vskip -0.15in
    \caption{Sensitivity of voltage, frequency, and recovery dynamics to LEL penetration for calibrated and uncalibrated models.}
    \label{fig:compare-LEL-ratios}
    \vskip -0.1in
\end{figure}
\vskip -0.5in
\section{Conclusion}
\label{sec:conclusion}
This paper proposes a calibrated dynamic load modeling framework for large electronic loads (LELs), such as data centers. It is built on a physics-based backbone capturing workload variability, cooling and auxiliary dynamics, and protection-driven disconnection and reconnection behavior relevant to grid stability. Building on this structure, we develop a pattern-consistent calibration approach using temporal contrastive learning (TCL) to align model parameters with real operational data through representative temporal patterns rather than exact trajectories. 
The framework targets realistic, physics-consistent LEL simulation rather than load forecasting, preserves data privacy through local parameter calibration, and supports system-level dynamic studies. Experimental results show that the calibrated model reproduces real-world load behavior, and ANDES-based simulations demonstrate disturbance and recovery responses consistent with utility-reported events across systems of varying scales and LEL penetration levels. 
Future work will extend this framework toward proactive control strategies for mitigating LEL-induced operational risks.

\bibliographystyle{IEEEtran}
\bibliography{ref}

\end{document}